\documentclass{article}
\usepackage{iclr2027_conference,times}

\usepackage{amsmath,amsfonts,bm}

\def\eqref#1{equation~\ref{#1}}

\def\1{\bm{1}}

\DeclareMathAlphabet{\mathsfit}{\encodingdefault}{\sfdefault}{m}{sl}
\SetMathAlphabet{\mathsfit}{bold}{\encodingdefault}{\sfdefault}{bx}{n}

\usepackage{enumitem}
\usepackage{graphicx}
\usepackage{booktabs}
\usepackage{amsfonts}
\usepackage{amsmath}
\usepackage{amssymb}
\usepackage{amsthm}
\usepackage{tikz}
\usepackage{url}
\usepackage{hyperref}
\usepackage{subcaption}

\newtheorem{definition}{Definition}
\newtheorem{proposition}{Proposition}
\newtheorem*{proposition*}{Proposition}

\title{Considering Context: When World Models Need Context Encoders}

\author{%
  Oleg Smirnov \\
  King AI Labs, Microsoft Gaming \\
  \texttt{oleg.smirnov@microsoft.com} \\
  \And
  Sofiane Ennadir \\
  King AI Labs,  Microsoft Gaming \\
  \texttt{sofiane.ennadir@king.com} \\
  \And
  John Pertoft \\
  King AI Labs, Microsoft Gaming \\
  \texttt{john.pertoft@king.com} \\
  \And
  Bjartur Hjaltason \\
  King AI Labs, Microsoft Gaming \\
  \texttt{bjartur.hjaltason@king.com} \\
  \And
  Sara Karimi \\
  King AI Labs, Microsoft Gaming \& KTH Royal Institute of Technology\\
  \texttt{sara.karimi@king.com} \\
}

\iclrfinalcopy

\begin{document}

\maketitle

\begin{abstract}
Methods for generalization in model-based reinforcement learning typically
assume that an agent cannot recover the latent context governing the
environment dynamics from its own experience, and therefore supplies it
externally. We formalize and test this assumption with \emph{predictive
sufficiency}, which quantifies what access to the context adds to next-step
prediction under the visitation distribution an agent induces, and separates
that quantity into a history-recoverable part, a residual requiring the true
context, and the deficit added by a finite model. We classify context-aware
algorithms by the predictive risk their conditioning set can target and
demonstrate across environments of increasing identification difficulty that
the headroom does not follow the MDP class. The same task under different
priors leaves predictive headroom in one setting and nothing distinguishable
from zero in another, where the agent's behavior implicitly identifies the
context and any benefit of such a mechanism cannot be attributed to missing
information. Where headroom persists, the learned state exposes it only
partially, and adding the true context still lowers the risk. Our contribution
is a practical criterion for matching contextual mechanisms to the information
available to them, estimated from the ordinary trained agent without a
reference policy.
\end{abstract}

\section{Introduction} \label{sec:introduction}
In model-based reinforcement learning (RL), even minor changes in environment
dynamics, e.g., the mass of a body or the friction of a floor in robotics, can
degrade next-state predictions and, through compounding rollout errors, the
policies built upon them~\citep{lee2020context}. Settings with such variations
are often formalized as contextual MDPs~\citep{hallak2015contextual} or hidden
parameter MDPs~\citep{doshi2016hidden}. The latent parameters may be drawn
i.i.d.\ from a shared prior~\citep{benjamins2022contextualize}, evolve
according to a background process, as in dynamic parameter
MDPs~\citep{pmlr-v139-xie21c}, or vary without structure, folding the problem
into a general partially observable MDP. They can also change between
episodes, during an episode, or even at every
timestep~\citep{dulacarnold2021challenges}.

In these settings, a predictor that only sees the current state and action has
to average over the contexts that are still plausible at that point, resulting
in a model mixture and reduced prediction accuracy. A common solution is to
infer the missing information online and provide it to the agent, where
existing approaches include supervised system identification, meta-learned
adaptation, latent context encoders, or conditioning on ground-truth context
~\citep{duan2016rl,nagabandi2019learning,lee2020context,zintgraf2020varibad,
  evans2022context,roder2025dynamics,prasanna2024dreaming}. Those methods aim
for improving how a finite model uses experience without adding new
observations. However, the agent's trajectories can contain sufficient evidence
about the active context, and a model that uses history already has access to
this information. What a contextual mechanism can still add is exactly the
one-step advantage that observing the context grants a predictor under the
data distribution the agent itself induces.

\begin{figure}[t]
\centering
\includegraphics{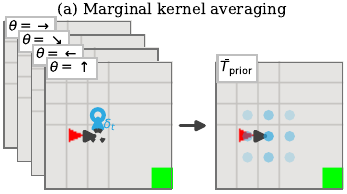}\hspace{0.46in}%
\includegraphics{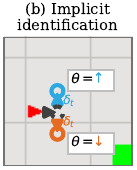}\hspace{0.46in}%
\includegraphics{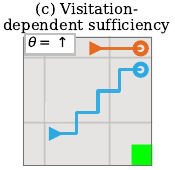}
\caption{Contextual variation in MiniGrid, where a hidden wind variable
  probabilistically displaces the agent after each move, and walls block the
  displacement. (a) A model trained on pooled experiences with different winds
  learns the averaged kernel, which has no net displacement and inflated
  spread and governs no individual episode. (b) Under the same policy the
  trajectories separate mid-episode, so the state alone is consistent with
  only one wind and identifies it implicitly, rendering contextual mechanisms
  partially redundant. (c) Under one and the same context, the trajectory of
  an agent crossing the open grid (blue) bends downwind and betrays the
  context, while an agent moving along the downwind border (orange) shows no
  drift at all, since the obstacle blocks it. What remains for context
  mechanisms to gain depends on what the agent's own trajectories already
  reveal, and on what they cover.}
\label{fig:wind_overview}
\end{figure}

Figure~\ref{fig:wind_overview} illustrates these effects. Pooling distinct
contexts yields an averaged kernel that governs no particular episode, while
trajectories reveal or conceal the context depending on where they lead,
identifying the dynamics before any dedicated mechanism infers them.

We formalize this question through \emph{predictive sufficiency}, quantifying
the Bayes one-step advantage of context for memoryless and history-aware
predictors. We distinguish prior-averaged structural variation from
\emph{predictive headroom}, the potential improvement in next-state prediction
from observing the true context under the agent's observed trajectories. We
split headroom into information recoverable from history and a residual
requiring context, while separating the learned predictor's finite-model
deficit. Reading context-aware methods through the information they access
turns the decomposition into class-level predictions: an encoder has no
missing information to supply where visitation alone makes the context
identifiable, short windows can recover headroom carried by recent evidence,
and persistent representations can exploit information spanning episode
boundaries.

Our contribution is threefold: i) we analyze visitation-mediated implicit
identification, where the agent's behavior concentrates the posterior over the
context until nothing is left for another mechanism to supply; ii) we
formulate predictive sufficiency, which asks a representation to preserve the
transition information the context carries rather than the context itself, and
decompose the resulting gap exactly across the inputs a method conditions on;
and iii) we propose a principled criterion, confirmed in experiments across
MDP classes of increasing identification difficulty, that determines which
conditioning sets have predictive information to add and whether linear
decoding of a learned state recovers it.

\section{Related Work} \label{sec:related_work}
\textbf{MDPs with latent context variables.} Contextual MDP
formulation~\citep{hallak2015contextual} also appears as
\emph{hidden-parameter}~\citep{doshi2016hidden},
\emph{latent}~\citep{kwon2021rl}, and is closely related to
\emph{Bayes-adaptive}~\citep{duff2002optimal} MDPs, which consider a belief
over the unknown parameters. Work in these settings has studied
exploration~\citep{henaff2023study}, adaptation, regret, and zero-shot
generalization~\citep{kirk2023survey}. Common benchmarks in gaming
settings~\citep{chevalier2023minigrid,cobbe2020procgen,hafner2022benchmarking}
typically vary per-episode initial state distribution rather than dynamics,
while meta-RL tasks~\citep{rakelly2019pearl} mix reward, goal, and dynamics
variation. The Real-World Reinforcement Learning
suite~\citep{dulacarnold2021challenges} includes dynamics randomization,
stochastic disturbances, and degradations in continuous control, varied
between episodes. CARL~\citep{benjamins2022contextualize} turns continuous
environment parameters into explicit context variables and defines protocols
on contextualized control tasks. LILAC~\citep{pmlr-v139-xie21c} instead
considers structured evolution of an episode-level latent across a sequence of
tasks. We use these settings to study transition prediction and ask whether
and how predictive differences carry to downstream control.

\textbf{Context inference and conditioning.} In meta-RL,
PEARL~\citep{rakelly2019pearl} infers a latent task variable with a
permutation-invariant encoder and VariBAD~\citep{zintgraf2020varibad}
maintains a Bayes-adaptive belief over task parameters, separating information
shared across tasks from information inferred within one, in the spirit of the
empirical-Bayes recasting of meta-learning~\citep{grant2018recasting}. For
dynamics, UP-OSI~\citep{yu2017preparing} learns a supervised map from recent
state--action histories to physical parameters that condition a universal
policy, GrBAL and ReBAL~\citep{nagabandi2019learning} meta-train forward
models that adapt from recent trajectory segments through gradient-based and
recurrent updates respectively, and CaDM~\citep{lee2020context} and
IIDA~\citep{evans2022context} infer a latent dynamics context from recent
transitions without observing the true parameters. Within Dreamer,
cRSSM~\citep{prasanna2024dreaming} conditions the world model and policy on
observed context, while DALI~\citep{roder2025dynamics} infers it with a
separate history encoder and forward-dynamics objective. While these methods
offer inductive biases for inference from experience, we ask a prior question,
namely what information their inputs can supply under a given visitation,
which our framework makes precise.

\section{Theoretical Framework} \label{sec:framework}
\textbf{Preliminaries.} We formalize the environment as a contextual MDP $(S,
A, T_\theta, R_\theta, \gamma)$, where both the transition kernel $T_\theta$
and the reward function $R_\theta$ are parameterized by a latent context
variable $\theta$. We denote an episode trajectory by
$\tau=(s_0,a_0,s_1,a_1,\ldots,s_T)$ and its prefix up to time $t$ by
$\tau_{\leq t}$. The context is drawn at the start of each episode, either
independently from a prior $p(\theta \mid \alpha)$ with shared hyperparameters
$\alpha$, or from an autonomous Markov chain $p(\theta_{i+1} \mid \theta_i)$
across episodes. We restrict attention to contexts fixed within an episode:
time-varying context requires an augmented latent process and remains outside
the present framework. Our analysis is concerned with the transition kernel,
and its predictive quantities require no reward labels.

\textbf{Pooled and prior-averaged prediction.} Let $p^\pi(\theta,s,a,s')$
denote the joint distribution induced by drawing $\theta\sim
p(\theta\mid\alpha)$, rolling out policy $\pi$ under $T_\theta$, and sampling
a transition according to the evaluation weighting. In the realizable
infinite-data limit, a memoryless world model trained by maximum likelihood on
the pooled transitions converges to
\begin{equation}
\bar T_{\mathrm{emp}}(s' \mid s, a) = p^\pi(s' \mid s, a) = \mathbb{E}_{\theta
  \sim p^\pi(\theta \mid s, a)} \bigl[T_\theta(s' \mid s, a)\bigr].
\label{eq:empirical-marginal}
\end{equation}
By Bayes' rule, $p^\pi(\theta\mid s,a)\propto
p(\theta\mid\alpha)d^\pi_\theta(s,a)$, where $d^\pi_\theta$ is the
corresponding state--action occupancy induced by $T_\theta$ and $\pi$. Thus it
need not equal $p(\theta\mid\alpha)$, making $\bar T_{\mathrm{emp}}$ an
observational mixture and the Bayes-optimal log-loss predictor conditioned
only on $(s,a)$. At a fixed query $(s,a)$, averaging the conditional
kernels under the context prior gives
\begin{equation}
\bar T_{\mathrm{prior}}(s' \mid s, a) = \mathbb{E}_{\theta \sim p(\theta \mid
  \alpha)}\bigl[T_\theta(s' \mid s, a)\bigr],
\label{eq:bma}
\end{equation}
the Bayesian model average~\citep{hoeting1999bayesian}. We refer to this
mixture's predictive loss relative to the context-conditional kernels as the
\emph{marginal averaging problem}. This standardized reference compares
kernels at a shared $(s,a)$ before visitation reweights the context prior. If
$\theta\perp(s,a)$ under $p^\pi$, it equals $\bar T_{\mathrm{emp}}$. For a
$\theta$-independent reference distribution $\mu(s,a)$, define
\begin{equation}
\mathcal{I}_{\mathrm{struct}} = \mathbb{E}_{(s,a) \sim \mu}\,
\mathbb{E}_{\theta \sim p(\theta \mid \alpha)} \left[D_{\mathrm{KL}}\!\left(
  T_\theta(\cdot \mid s,a) \,\middle\|\, \bar T_{\mathrm{prior}}(\cdot \mid
  s,a)\right)\right].
\label{eq:mi-struct}
\end{equation}
$\mathcal{I}_{\mathrm{struct}}$ is conditional mutual information under the
constructed joint $\mu(s,a)p(\theta\mid\alpha)T_\theta(s'\mid s,a)$ and
averages the corresponding Jensen gap across shared queries
(Appendix~\ref{app:sim-proof}). This quantity measures kernel separation
rather than the training loss of a pooled predictor.

\textbf{Predictive headroom under visitation.} Consider the memoryless
predictor as per Equation~\ref{eq:empirical-marginal} and an Oracle predictor
that additionally observes the realized context, so that its target is
$T_\theta$. The difference between their minimum expected log losses is
\begin{equation}
\mathcal{I}_{\mathrm{emp}} = I(\theta; s_{t+1} \mid s_t, a_t) =
\mathbb{E}_{(s,a) \sim p^\pi}\, \mathbb{E}_{\theta \sim p^\pi(\theta \mid
  s,a)} \left[D_{\mathrm{KL}}\!\left( T_\theta(\cdot \mid s,a) \,\middle\|\,
  \bar T_{\mathrm{emp}}(\cdot \mid s,a)\right)\right].
\label{eq:mi}
\end{equation}
We call $\mathcal{I}_{\mathrm{emp}}$ the \emph{predictive headroom}. It is the
exact reduction in one-step log loss from revealing $\theta$ to a predictor
conditioned on $(s_t,a_t)$. A finite memoryless model can incur additional
estimation, approximation, or optimization error.

$\mathcal{I}_{\mathrm{emp}}$ uses the query distribution and context weights
induced by data collection, whereas $\mathcal{I}_{\mathrm{struct}}$ averages
over $\mu$. The two quantities therefore coincide when $\mu(s,a)=p^\pi(s,a)$
and the independence condition above holds, but neither bounds the other in
general. The difference
$\mathcal{I}_{\mathrm{struct}}-\mathcal{I}_{\mathrm{emp}}$ is, therefore,
descriptive, and may be negative. If $p^\pi(\theta\mid s,a)$ is concentrated,
the current $(s,a)$ already identifies much of the context, so large
structural variation can coexist with little predictive headroom. We call this
\emph{visitation-mediated implicit identification}. However, context remains
informative for prediction when visitation overlaps across contexts and the
posterior over $\theta$ stays diffuse. In practice, measuring how well
$\theta$ can be decoded from the current $(s,a)$ reveals to what extent the
state and action already identify the context.

In discrete environments with repeated contexts,
$\mathcal{I}_{\mathrm{emp}}$ can be estimated from context-labelled transition
counts, whereas continuous settings require mutual information
estimation. Small values leave little advantage even for a memoryless Oracle,
while large values expose one-step errors that can compound in model
rollouts. Proposition~\ref{prop:sim} in Appendix~\ref{app:sim-proof} makes this
connection explicit: with bounded rewards, the value error under any
stationary Markov policy is bounded by $\gamma R_{\max}/(1-\gamma)^2$ times
the worst-case transition error~\citep{kearns2002near, brafman2002rmax},
motivating separate evaluation of predictive headroom and return.

\textbf{History, Oracle context, and finite models.} Assume the transition
process is Markov conditional on $\theta$, so that $s_{t+1}\perp\tau_{<t}\mid
(s_t,a_t,\theta)$. The Bayes-optimal history-aware target is
\begin{equation}
\bar T_{\mathrm{hist}}(s' \mid \tau_{\leq t}, a_t) = \mathbb{E}_{\theta \sim
  p^\pi(\theta \mid \tau_{\leq t},a_t)} \bigl[T_\theta(s' \mid s_t,a_t)\bigr].
\label{eq:history-marginal}
\end{equation}
Let $\mathcal{R}_{\mathrm{M}}$, $\mathcal{R}_{\mathrm{H}}$, and
$\mathcal{R}_{\mathrm{O}}$ denote the minimum expected one-step log loss when
the predictor observes $(s_t,a_t)$, $(\tau_{\leq t},a_t)$, and
$(s_t,a_t,\theta)$, respectively. The residual Oracle headroom after observing
history is
\begin{equation}
\mathcal{I}_{\mathrm{hist}} = I(\theta; s_{t+1} \mid \tau_{\leq t},a_t) =
\mathcal{R}_{\mathrm{H}}-\mathcal{R}_{\mathrm{O}}.
\label{eq:mi-history}
\end{equation}

\begin{proposition}[Predictive headroom decomposition]
\label{prop:headroom}
Under the conditional-Markov assumption and a common transition distribution,
\begin{equation}
\mathcal{I}_{\mathrm{emp}} = \mathcal{R}_{\mathrm{M}}-\mathcal{R}_{\mathrm{O}}
= \underbrace{\mathcal{R}_{\mathrm{M}}-
  \mathcal{R}_{\mathrm{H}}}_{\text{headroom closed by history}} +
\underbrace{\mathcal{I}_{\mathrm{hist}}}_{\text{residual Oracle headroom}}
\label{eq:headroom-decomposition}
\end{equation}
Consequently, $0 \leq \mathcal{I}_{\mathrm{hist}} \leq
\mathcal{I}_{\mathrm{emp}}$.
\end{proposition}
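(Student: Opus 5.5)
The plan is to express each minimum risk as a conditional entropy under the common joint distribution $p^\pi$ over $(\theta,\tau_{\leq t},a_t,s_{t+1})$, and then read off the decomposition as a telescoping identity.

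\textbf{Risks as conditional entropies.} For log loss, the minimum expected risk of a predictor observing a variable $X$ is attained by the true conditional $p^\pi(s_{t+1}\mid X)$, and its value is $H(s_{t+1}\mid X)$. The excess risk of any other predictor $q$ is the expected $D_{\mathrm{KL}}(p^\pi(\cdot\mid X)\,\|\,q(\cdot\mid X))\geq 0$. This gives
\[
\mathcal{R}_{\mathrm{M}}=H(s_{t+1}\mid s_t,a_t),\qquad \mathcal{R}_{\mathrm{H}}=H(s_{t+1}\mid \tau_{\leq t},a_t).
\]
For the Oracle, the conditional-Markov assumption yields
\[
p^\pi(s_{t+1}\mid\tau_{\leq t},a_t,\theta)=T_\theta(s_{t+1}\mid s_t,a_t),
\]
so
\[
\mathcal{R}_{\mathrm{O}}=H(s_{t+1}\mid s_t,a_t,\theta)=H(s_{t+1}\mid\tau_{\leq t},a_t,\theta).
\]
The "common transition distribution" hypothesis is exactly what guarantees that all three risks are evaluated under this single joint, with the same weighting over $t$. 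With that in place, $\mathcal{R}_{\mathrm{M}}-\mathcal{R}_{\mathrm{O}}=I(\theta;s_{t+1}\mid s_t,a_t)=\mathcal{I}_{\mathrm{emp}}$. To match Equation~\ref{eq:mi}, I expand this mutual information as the posterior-averaged KL divergence to the mixture $\bar T_{\mathrm{emp}}$, using Equation~\ref{eq:empirical-marginal}. The identity $\mathcal{I}_{\mathrm{hist}}=\mathcal{R}_{\mathrm{H}}-\mathcal{R}_{\mathrm{O}}$ is already Equation~\ref{eq:mi-history}, and it follows by the same computation with the mixture $\bar T_{\mathrm{hist}}$.

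\textbf{Decomposition and bounds.} Adding and subtracting $\mathcal{R}_{\mathrm{H}}$ gives Equation~\ref{eq:headroom-decomposition} directly. For the bounds, $\mathcal{I}_{\mathrm{hist}}\geq 0$ because it is a conditional mutual information, equivalently the nonnegativity of the KL excess risk above. For the upper bound it suffices to show $\mathcal{R}_{\mathrm{M}}\geq\mathcal{R}_{\mathrm{H}}$. This holds because $(s_t,a_t)$ is a function of $(\tau_{\leq t},a_t)$, and conditioning on more information cannot increase the Bayes log risk. Concretely, the memoryless Bayes predictor is a feasible history-aware predictor, so
\[
\mathcal{R}_{\mathrm{M}}-\mathcal{R}_{\mathrm{H}}=I(s_{t+1};\tau_{<t}\mid s_t,a_t)\geq 0.
\]

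\textbf{Main obstacle.} The delicate step is justifying that the Oracle risk is the same whether or not history is observed alongside $\theta$. This is where conditional Markovity is used, and without it $\mathcal{I}_{\mathrm{hist}}$ would not telescope against $\mathcal{I}_{\mathrm{emp}}$. A related subtlety is making sure every quantity refers to one joint distribution, so that differences of risks really are mutual informations. I will state this explicitly as the role of the common-distribution hypothesis. In continuous spaces, I will use differential entropies or, more safely, write everything as expected KL divergences, assuming finite risks so that the subtractions are well defined.
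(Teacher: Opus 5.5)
Your proposal is correct and follows the paper's proof essentially line for line. Both identify $\mathcal{R}_{\mathrm{M}}$, $\mathcal{R}_{\mathrm{H}}$, $\mathcal{R}_{\mathrm{O}}$ as conditional entropies under the common joint, use conditional Markovity to get $H(s_{t+1}\mid s_t,a_t,\theta)=H(s_{t+1}\mid \tau_{\leq t},a_t,\theta)$, and telescope; your explicit step $\mathcal{R}_{\mathrm{M}}-\mathcal{R}_{\mathrm{H}}=I(s_{t+1};\tau_{<t}\mid s_t,a_t)\geq 0$ usefully spells out the ``consequently'' bounds that the paper leaves implicit.
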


The proof appears in Appendix~\ref{app:sim-proof}. The first term measures
predictive information recovered from trajectory history beyond
$(s_t,a_t)$. The second is the improvement still available from revealing
$\theta$ to an ideal history-aware predictor. When
$\mathcal{I}_{\mathrm{hist}}=0$, context provides no further advantage after
history. For any finite recurrent model $q$, the remaining Oracle gap is
\begin{equation}
\mathcal{R}(q)-\mathcal{R}_{\mathrm{O}} =
\bigl[\mathcal{R}(q)-\mathcal{R}_{\mathrm{H}}\bigr] +
\mathcal{I}_{\mathrm{hist}},
\label{eq:finite-model-gap}
\end{equation}
where the bracketed term is estimation, approximation, or optimization error.
Neither a context pathway computed from the same within-episode history nor a
different recurrent core can improve on $\mathcal{R}_{\mathrm{H}}$ in
principle, but either may help a finite model approach it. The proposition applies to a
conditionally Markov state; in a partially observed process, the Oracle risk
must also condition on history.

\textbf{Contexts correlated across episodes.} The decomposition above takes
$\theta$ independent across episodes, so histories end at episode boundaries.
When contexts instead evolve as a Markov chain, $\theta_{i+1}\sim
p(\theta_{i+1}\mid\theta_i)$, completed episodes carry information about the
current context~\citep{choi2001hidden,pmlr-v139-xie21c}. For an ergodic chain,
the stationary distribution plays the role of the prior. Let $\tau^{+}_{\leq
  t}$ denote the cross-episode history concatenating completed episodes with
the current prefix, and $\mathcal{R}_{\mathrm{H}^{+}}$ the corresponding
minimum risk. Since $\tau_{\leq t}\subseteq\tau^{+}_{\leq t}$,
\begin{equation}
\mathcal{I}_{\mathrm{emp}}
= \underbrace{\mathcal{R}_{\mathrm{M}}-\mathcal{R}_{\mathrm{H}}}_{\text{within-episode}}
+ \underbrace{\mathcal{R}_{\mathrm{H}}-\mathcal{R}_{\mathrm{H}^{+}}}_{\text{cross-episode}}
+ \underbrace{I\bigl(\theta;\, s_{t+1}\mid \tau^{+}_{\leq t}, a_t\bigr)}_{\text{residual}},
\label{eq:chain-decomposition}
\end{equation}
where the middle term vanishes under independent draws, which motivates
episode-start evaluation before within-episode evidence accumulates. The
in-principle bound of Equation~\ref{eq:finite-model-gap} is relative to the
information interface: $\mathcal{R}_{\mathrm{H}}$ bounds any predictor
computed from the current episode, while a recurrent state carried across
resets or an explicit model of the context chain targets
$\mathcal{R}_{\mathrm{H}^{+}}\leq\mathcal{R}_{\mathrm{H}}$.

\textbf{Predictive sufficiency.} Predictive sufficiency asks a context
representation to preserve the transition information carried by $\theta$
without requiring recovery of $\theta$ itself.

\begin{definition}[$\varepsilon$-predictive sufficiency]
\label{def:psuff}
For a data-collection policy $\pi$, let $\tau$ be generated under episode
context $\theta\sim p(\theta\mid\alpha)$. A representation $c_t$ computed from
$\tau_{\leq t}$ induces the Bayes-optimal predictor
\begin{equation}
\bar T_c(s' \mid s_t,a_t,c_t) = \mathbb{E}_{\theta' \sim p^\pi(\theta' \mid
  s_t,a_t,c_t)} \bigl[T_{\theta'}(s' \mid s_t,a_t)\bigr].
\label{eq:representation-marginal}
\end{equation}
The representation is $\varepsilon$-predictively sufficient under $\pi$ if
\begin{equation}
\mathbb{E}_{\theta \sim p(\theta \mid \alpha)}\,\mathbb{E}_{\tau \sim \pi,
  T_\theta}\!\left[\frac{1}{T}\sum_{t=0}^{T-1}
  D_{\mathrm{KL}}\!\left(T_\theta(\cdot \mid s_t,a_t)\,\middle\|\, \bar
  T_c(\cdot \mid s_t,a_t,c_t)\right)\right] \leq \varepsilon.
\label{eq:psuff}
\end{equation}
\end{definition}

A constant $c_t$ induces $\bar T_{\mathrm{emp}}$ and attains the sufficiency
gap $\varepsilon=\mathcal{I}_{\mathrm{emp}}$. Preserving the full
posterior-predictive information in $\tau_{\leq t}$ attains
$\varepsilon=\mathcal{I}_{\mathrm{hist}}$, while $\varepsilon=0$ requires
identifying the realized transition kernel only on the visited support.
Contexts inducing the same kernel there are predictively equivalent and may
share a representation. Predictive sufficiency retains trajectory information
that affects transitions and may discard other context components. In the
correlated-episode setting, $c_t$ may be computed from $\tau^{+}_{\leq t}$.
The definition then applies with the expectation taken over the stationary
chain, and preserving the full cross-episode posterior attains the residual of
Equation~\ref{eq:chain-decomposition}.

The decomposition identifies the part of an Oracle gap accessible to
architectural intervention. Its control effect remains an empirical question:
return also depends on horizon and reward sensitivity, and the definition is
relative to the state--action distribution induced by $\pi$, so policy
improvement requires coverage of the new distribution or a fresh estimate of
$\varepsilon$.

\section{Controlled Diagnostic} \label{sec:diagnostic}
The framework distinguishes variation in a transition family from predictive
headroom under the distribution generated by an agent. We test this
distinction in tractable i.i.d.\ settings where the context-conditional
kernels are known. MiniGrid~\citep{chevalier2023minigrid} provides a finite
state space in which both $\mathcal{I}_{\mathrm{struct}}$ and
$\mathcal{I}_{\mathrm{emp}}$ can be evaluated from the exact kernels at the
logged continuous contexts. The Point-Mass-Attractor environment provides a
continuous-state counterpart with known noisy one-step kernels.

\textbf{Diagnostic design.} We implement two MiniGrid transition variants:
MiniGrid-Wind samples a drift that changes the realized displacement after
each action, while MiniGrid-Resistance samples an action-failure
probability. Point-Mass-Attractor (Figure~\ref{fig:attractor_force_field},
Appendix~\ref{app:attractor_diagnostic}) samples one of the attractor
locations, causing trajectories to curve toward the selected center. In all
cases, the diagnostic varies only an unobserved episode-level transition
parameter. We collect trajectories under uniform random actions and record
context for estimation while withholding it from the predictive information
set. Both continuous and discrete settings exploit the known kernels rather
than estimating next-state distributions from samples. Environment definitions
and protocols appear in Appendix~\ref{app:controlled_diagnostics}.

Alongside the fixed $\theta$-independent reference, we evaluate
$\mathcal{I}_{\mathrm{struct}}^\pi$ by setting $\mu(s,a)=p^\pi(s,a)$ while
retaining prior context weights. Comparing $\mathcal{I}_{\mathrm{struct}}^\pi$
with $\mathcal{I}_{\mathrm{emp}}$ holds the query distribution fixed and
changes only the context weights. All information quantities use natural
logarithms.

\begin{figure}[t]
\centering
\includegraphics[width=\linewidth]{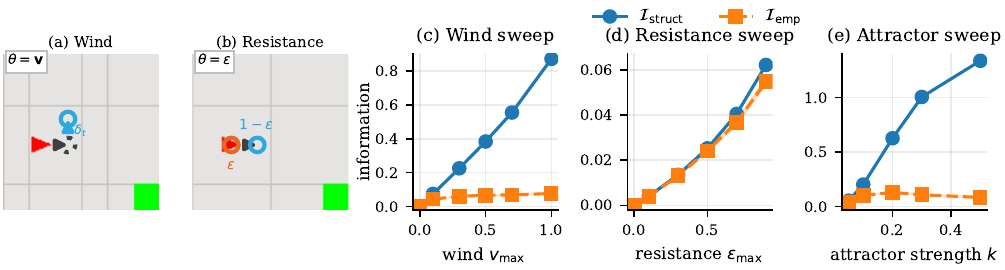}
\caption{Controlled diagnostics. Wind applies a sampled drift after the native
  action (a), whereas Resistance either commits or voids the action (b).
  Across perturbations (c--e), $\mathcal{I}_{\mathrm{struct}}$ separates from
  predictive headroom on Wind and Point-Mass-Attractor but remains close to it
  on Resistance.}
\label{fig:diagnostic-sweeps}
\end{figure}

\begin{table}[t]
\centering
\scriptsize
\caption{Representative MiniGrid and Point-Mass-Attractor settings. The
  matched column sets $\mu(s,a)=p^\pi(s,a)$ while retaining prior context
  weights. CIs are 95\% episode-level bootstrap. All estimates are
  conservatively biased upward, so finite-sample error can inflate headroom.}
\label{tab:diagnostic-summary}
\begin{tabular}{@{}llccc@{}}
\toprule
State & Setting & $\mathcal{I}_{\mathrm{struct}}$
  & $\mathcal{I}_{\mathrm{struct}}^\pi$
  & $\mathcal{I}_{\mathrm{emp}}$ \\
\midrule
Discrete & Wind ($v_{\max}=1$) & 0.871 & 0.620 & $0.077 \pm 0.002$ \\
Discrete & Resistance ($\epsilon_{\max}=0.9$) & 0.062 & 0.059
  & $0.055 \pm 0.001$ \\
Continuous & Attractor ($k=0.3$) & 1.006 & 1.006
  & $0.109 \pm 0.002$ \\
\bottomrule
\end{tabular}
\end{table}

\textbf{Structural variation and predictive headroom.} Panels c--e in
Figure~\ref{fig:diagnostic-sweeps} depict two regimes. Wind develops
substantial sensitivity while its estimated predictive headroom is 8.8\% of
$\mathcal{I}_{\mathrm{struct}}$. At $v_{\max}=1$, matching the query
distribution reduces the reference by 29\%, and the data-conditional context
weights account for a further 88\% reduction. Resistance changes little across
the three columns of Table~\ref{tab:diagnostic-summary}; its low headroom
follows from modest kernel separation rather than state-mediated
identification. Point-Mass-Attractor reproduces the Wind regime in continuous
state, where its kernel separation is invariant to the query distribution on
the interior support, but an estimated 11\% remains under observed visitation.

\textbf{Intervening on visitation.} To demonstrate the dependency, we reset
selected state components during collection without resampling the episode
context. Resetting MiniGrid position and direction every ten steps raises
$\mathcal{I}_{\mathrm{emp}}$ by 430\%, and resets every two steps raise it by
833\%. In Point-Mass-Attractor, resetting position and velocity every ten
steps raises the estimate by 324\%; preserving velocity raises it by only
28\%. Hence, the control identifies velocity as the principal carrier of
context in the continuous system. Appendix~\ref{app:controlled_diagnostics}
reports the full sweeps and complementary state-projection checks.

Together, these diagnostics show why contextual variation alone does not
warrant contextual modeling. Wind and Point-Mass-Attractor have strongly
separated kernels, but their induced states leave little one-step headroom,
whereas MiniGrid-Resistance reaches similar headroom because its kernels
differ little.

\section{Experimental Validation} \label{sec:experiments}
Prior work has proposed a range of mechanisms for latent context MDPs, each
targeting a specific level in a hierarchy of settings and covering the levels
below it. Based on the theoretical analysis, we predict that predictive
sufficiency under a given visitation determines when a mechanism has
additional information to exploit: headroom must remain, and conditioning on
its inputs must lower the corresponding optimal risk. The resulting bounds
separate the available information from how effectively the inductive bias
uses it, which depends on the information structure rather than the class the
mechanism was designed for.

Based on those insights, we formulate four falsifiable predictions:
\begin{itemize}[leftmargin=*]
  \item \textbf{(P1)} Implicit identification renders observed context
    redundant for one-step prediction irrespective of the MDP class. When
    visitation makes the context identifiable from the current state and
    action, the agent's own behavior has already performed the identification
    that context mechanisms are built for, making the predictive headroom
    vanish. Benefits of the mechanism therefore cannot be attributed to
    missing predictive information.
  \item \textbf{(P2)} Predictive sufficiency separates information failure
    from finite-model error. When headroom persists but the episode's history
    reaches the observed-context risk, any remaining predictive deficit for a
    model with access to that history is due to estimation, approximation, or
    optimization.
  \item \textbf{(P3)} The sufficiency optimum is relative to its conditioning
    set. For methods that do not observe context, only mechanisms that carry
    information across episode boundaries can improve on the within-episode
    optimum when consecutive contexts are dependent. Their advantage should be
    largest at episode start and vanish under independent draws.
  \item \textbf{(P4)} For a fixed visitation distribution, the reduction in
    expected log loss obtained by adding a context representation to the
    current state and action equals the difference between predictive headroom
    and that representation's minimum sufficiency gap. Under implicit
    identification, both the reduction and the tolerance vanish.
\end{itemize}
\begin{table}[t]
\centering
\scriptsize
\setlength{\tabcolsep}{4pt}
\caption{Method classes ordered by the predictive risk their conditioning set
  can target. Bold denotes the evaluated representatives. The memoryless class
  is instantiated by the measurement protocol itself.}
\label{tab:interfaces}
\begin{tabular}{@{}llll@{}}
\toprule
Class & Conditioning set (bound) & Sufficient when & Methods \\
\midrule
Memoryless & $(s_t,a_t)$ \,($\mathcal{R}_{\mathrm{M}}$)
  & $\theta$ decodable from $(s_t,a_t)$
  & \textbf{Base probes} \\
Transition window & last $K$ transitions \,($\geq\mathcal{R}_{\mathrm{H}}$)
  & recent evidence
  & \textbf{DALI-S}, CaDM, UP-OSI, GrBAL, PEARL, IIDA \\
Recurrent & within-episode summary \,($\geq\mathcal{R}_{\mathrm{H}}$)
  & evidence within the episode
  & \textbf{DreamerV3}, ReBAL, VariBAD \\
Cross-episode chain & cross-episode summary \,($\geq\mathcal{R}_{\mathrm{H}^{+}}$)
  & episodes correlated
  & \textbf{Carried state}, \textbf{LILAC}, RL$^2$, L2RL \\
Ground-truth context & realized $\theta$ \,($\mathcal{R}_{\mathrm{O}}$)
  & always
  & \textbf{cRSSM-D}, cRSSM-C \\
\bottomrule
\end{tabular}
\end{table}

\textbf{Method classes.} Table~\ref{tab:interfaces} groups representative
methods by their conditioning information, ordered by the predictive risk each
class can target. The window class adds a dedicated context encoder over the
last $K$ transitions, ordered or unordered. We take
DALI-S~\citep{roder2025dynamics} as the ordered representative, since none of
our settings makes transition order informative and the framework therefore
cannot separate the variants. For the cross-episode class, we evaluate a
Markov chain-latent agent similar to LILAC~\citep{pmlr-v139-xie21c} and a
carried-state variant that preserves recurrent state across resets, the
mechanism of RL$^2$~\citep{duan2016rl} and L2RL~\citep{wang2016learning}, and the smallest
member of this family, used as an ablation in the non-i.i.d.\ setting only. Finally,
the observed-context class provides the
privileged reference that separates recoverable headroom from task difficulty,
represented by cRSSM-D~\citep{prasanna2024dreaming}. All evaluated methods
share the same DreamerV3 backbone~\citep{hafner2025mastering}. Implementation,
training, and evaluation details are provided in
Appendix~\ref{app:experimental_details}.

\textbf{Environments.} Common benchmarks often vary initial states rather than
the latent dynamics, while meta-RL suites mix reward with dynamics variation,
and perturbation studies confound predictive headroom with task difficulty. To
address this, we build on CARL~\citep{benjamins2022contextualize} and
RWRL~\citep{dulacarnold2021challenges}, which isolate dynamics variation. As
with the methods, we select representative environments for the classes of
contextual MDPs studied in prior work, ordered by the difficulty of
identifying the context.

The two i.i.d.\ settings are based on DMC Walker from the CARL suite,
repeating the contrast of Section~\ref{sec:diagnostic} at benchmark scale: (i)
an \textit{identifiable} setting, where a latent horizontal wind, sampled
uniformly per episode, exerts force through nonzero fluid density, probe
headroom collapses while decodability stays high, and return remains
competitive (P1, P4), and (ii) a \textit{hard} setting, where a latent
actuator-strength scale is sampled uniformly under fixed gravity and headroom
persists above the permutation null (P2). The settings separate collapsed and
persistent headroom under baseline visitation. Each method is then profiled
under its own. In the non-i.i.d.\ drift setting, a single Walker parameter
follows an RWRL perturbation schedule, performing a random walk across
episodes at three rates on identical frozen realizations (P3).

Additionally, we include a simple linear-Gaussian point mass environment whose
hidden context evolves across episodes as a Markov chain with high
self-transition probability, so completed episodes carry predictive value
about the current one. There, known transition kernels and disturbance filters
provide analytical references for calibrating the Gaussian-score diagnostics
(Appendix~\ref{app:calibration}). A short-episode variant limits
within-episode evidence, and an independent-draw variant controls for the
information supplied by cross-episode disturbance history (P3). One difficulty
class remains uncovered: to our knowledge, no standard benchmark makes the
context identifiable only through deliberately informative actions, so every
passive class is predicted equivalent there. Table~\ref{tab:environments}
summarizes the settings with their context process and difficulty level.

\begin{table}[t]
  \centering
  \scriptsize
  \caption{Environment settings, ordered by the level at which the context
    becomes identifiable.}
  \label{tab:environments}
  \begin{tabular}{@{}ll p{0.44\linewidth} l l@{}}
    \toprule
    Setting & Environment & Context process & Difficulty level & Tests \\
    \midrule
    Calibration
      & Point mass
      & persistent two-state chain, i.i.d.\ and short-episode variants
      & within/across episodes
      & P3 (control) \\
    Identifiable
      & CARL Walker
      & i.i.d.\ per episode
      & current state and action
      & P1, P4 \\
    Hard
      & CARL Walker
      & i.i.d.\ per episode
      & episode history
      & P2, P4 \\
    Drift
      & RWRL Walker
      & non-i.i.d.\ Markov chain across episodes
      & across episodes
      & P3 \\
    \bottomrule
  \end{tabular}
\end{table}

\textbf{Measurements.} To estimate headroom, history recovery, and the
sufficiency gap, we compare held-out next-state log losses from MLP and
recurrent probes with Gaussian likelihood heads, fitted independently of the
agent on each method's evaluation episodes. Their inputs are (i) current state
and action, (ii) adding the true context, (iii) adding a linear readout $c_t$
trained to predict $\theta$ from the learned recurrent state, (iv) adding
both, (v) adding a fixed-size history window, and (vi) adding both window and
readout. A separate decodability probe predicts context from state and action;
a permutation null shuffles contexts across episodes. Comparisons use fixed
training budgets and separate episodes for fitting, validation, and scoring.

Because the probes are computed under each method's visitation, they are
considered as within-method evidence, following the policy dependence of
Definition~\ref{def:psuff}. Cross-method comparisons rely on return over
paired evaluation contexts and the class-level predictions. On the drift
setting the primary metric is the start transient: early-window prior loss
minus the cached-prefix reference loss.

\textbf{Evaluation design.} We use differences in held-out probe risks as
empirical diagnostics of the predictive quantities in
Section~\ref{sec:framework}. Let $\widehat{\mathcal{R}}_X$ denote the
next-state log loss of a fitted probe with inputs $X$. Probe headroom is
defined as $\widehat G_\theta=\widehat{\mathcal{R}}_{\mathrm{M}}-
\widehat{\mathcal{R}}_{\mathrm{O}}$. Under the assumptions of
Section~\ref{sec:framework}, its population counterpart satisfies
$G_\theta=\mathcal{I}_{\mathrm{emp}}+e_{\mathrm{M}}-e_{\mathrm{O}}$, where
$e_X$ includes approximation and fitting error above the corresponding Bayes
risk. The finite-family reference $\widehat{\mathcal{R}}_{\mathrm{ref}}$ is
the minimum risk over the window, the learned-state readout, their
combination, and a recurrent probe. For readout $c_t$, we define
$\widehat{\Delta}_{\mathrm{FM}}(c)=
\widehat{\mathcal{R}}_c-\widehat{\mathcal{R}}_{\mathrm{ref}}$ on the same
transitions. When probe headroom separates from the permutation null, we
report normalized gain
$(\widehat{\mathcal{R}}_{\mathrm{M}}-\widehat{\mathcal{R}}_c)/ \widehat
G_\theta$. Residual context gain is
$\widehat{\mathcal{R}}_c-\widehat{\mathcal{R}}_{c,\theta}$. For Bayes-optimal
predictors, the corresponding risk difference under the averaging law of
Definition~\ref{def:psuff} equals the smallest admissible $\varepsilon$. We
also report differences over the first $k{=}25$ transitions of each
episode.

\section{Results} \label{sec:results}

\begin{table}[t]
\centering
\scriptsize
\caption{Computed quantities over i.i.d.\ environments. Identifiable and Hard
  settings report return, probe headroom, and the residual context gain of the
  linear context readout. Entries are three-seed IQM with 95\% bootstrap
  confidence intervals, method differences use paired seed resampling.}
\label{tab:results-iid}
\begin{tabular}{@{}lcccccc@{}}
\toprule
& \multicolumn{3}{c}{Identifiable} & \multicolumn{3}{c}{Hard} \\
\cmidrule(lr){2-4}\cmidrule(l){5-7}
Method
& \rotatebox{45}{Return $\uparrow$} & \rotatebox{45}{Headroom}
& \rotatebox{45}{Resid. gain}
& \rotatebox{45}{Return $\uparrow$} & \rotatebox{45}{Headroom}
& \rotatebox{45}{Resid. gain} \\
\midrule
DreamerV3     & $712 \pm 10$ & $-0.00 \pm 0.12$ & $-0.13 \pm 0.11$ & $946 \pm 6$ & $2.39 \pm 0.70$ & $0.38 \pm 0.08$ \\
DALI-S        & $756 \pm 72$ & $0.03 \pm 0.13$ & $-0.15 \pm 0.13$ & $941 \pm 14$ & $2.10 \pm 1.76$ & $0.44 \pm 0.19$ \\
LILAC-style   & $589 \pm 88$ & $-0.09 \pm 0.08$ & $0.00 \pm 0.09$ & $\mathbf{953 \pm 4}$ & $3.49 \pm 0.68$ & $0.73 \pm 0.29$ \\
cRSSM-D       & $\mathbf{862 \pm 17}$ & $-0.14 \pm 0.16$ & $-0.06 \pm 0.09$ & $948 \pm 6$ & $3.64 \pm 1.71$ & $-0.22 \pm 0.80$ \\
\bottomrule
\end{tabular}
\end{table}

\begin{table}[t]
\centering
\scriptsize
\caption{Computed quantities over the non-i.i.d.\ environment: start transient and
  start prior loss at three drift rates. Entries are means $\pm$ sample stdev
  across seeds.}
\label{tab:results-drift}
\begin{tabular}{@{}lcccccc@{}}
\toprule
& \multicolumn{3}{c}{Start transient $\downarrow$}
& \multicolumn{3}{c}{Start prior loss $\downarrow$} \\
\cmidrule(lr){2-4}\cmidrule(l){5-7}
Method
& \rotatebox{45}{Low} & \rotatebox{45}{Mid} & \rotatebox{45}{High}
& \rotatebox{45}{Low} & \rotatebox{45}{Mid} & \rotatebox{45}{High} \\
\midrule
DreamerV3     & $\mathbf{2.64 \pm 0.45}$ & $2.80 \pm 0.15$ & $2.79 \pm 0.86$ & $\mathbf{10.15 \pm 0.82}$ & $10.77 \pm 0.59$ & $11.24 \pm 0.83$ \\
Carried state & $4.06 \pm 0.36$ & $3.92 \pm 0.69$ & $3.93 \pm 0.51$ & $13.30 \pm 0.56$ & $12.51 \pm 0.75$ & $12.75 \pm 0.77$ \\
DALI-S        & $3.12 \pm 0.19$ & $\mathbf{2.52 \pm 0.13}$ & $3.05 \pm 0.89$ & $10.68 \pm 0.05$ & $\mathbf{10.45 \pm 1.23}$ & $11.37 \pm 0.33$ \\
LILAC-style   & $2.94 \pm 0.92$ & $3.19 \pm 0.86$ & $\mathbf{2.46 \pm 0.61}$ & $11.14 \pm 0.93$ & $11.54 \pm 0.90$ & $\mathbf{10.22 \pm 0.91}$ \\
cRSSM-D       & $2.84 \pm 0.35$ & $3.05 \pm 0.41$ & $2.55 \pm 0.71$ & $10.48 \pm 0.31$ & $10.68 \pm 1.05$ & $10.38 \pm 0.33$ \\
\bottomrule
\end{tabular}
\end{table}

\textbf{Settings with i.i.d.\ contexts.} Table~\ref{tab:results-iid} confirms
the intended contrast across selected methods. In the hard setting, estimated
headroom remains open and varies for every method, while returns stay within
$2\%$ of the baseline. There, successful control coexists with unresolved
predictive context. Residual context gain remains positive for the
inference-based methods and vanishes only for cRSSM-D, whose linear readout
carries the ground-truth context (P4). In the identifiable setting, headroom
collapses under each profiled visitation while the context stays decodable
from the state with $R^2 \geq 0.93$, consistent with P1. cRSSM-D still leads
the returns there, despite negligible predictive headroom.

\textbf{Settings with non-i.i.d.\ contexts.} As reported in
Table~\ref{tab:results-drift}, the context-blind baseline achieves the lowest
mean startup losses at the low drift rate. However, at the mid rate, the
transition-window agent already reduces the start transient by approximately
$10\%$ and start prior loss by $3\%$ relative to the baseline. At the high
rate, the chain-latent agent's advantage reaches approximately $12\%$ and
$9\%$, respectively, with start prior loss also about $2\%$ below the
observed-context variant. The same methods lead in absolute start prior loss
at their respective rates. Carried state has the largest transient across
rates, separating episode-level latent prediction from simply continuing the
previous episode's state through the reset. Together, these profiles support
regime-related benefits of the conditioning mechanisms under each method's own
visitation (P3). Figure~\ref{fig:drift-endpoints} in
Appendix~\ref{app:results} depicts the considered metrics across the methods
and drift rates.

\section{Discussion}\label{sec:discussion}
The results demonstrate that the value of context mechanisms is governed by
the information structure of the setting in addition to their design,
consistent with the theoretical analysis. First, the probes tracked exact
targets on the calibration setting, where history recovery reached $0.89$
against the exact $0.99$ (P2), providing a reference for interpreting their
estimates. Next, in the identifiable i.i.d.\ setting, the headroom collapsed
under every profiled visitation while the context stayed decodable, so return
differences reflect optimization rather than information. In the hard setting,
estimated headroom stayed open across the evaluated methods despite comparable
returns, showing that successful control does not have to resolve predictive
context. Finally, the non-i.i.d.\ results connect predictive benefits to both
the drift regime and mechanism design. The contrast between chain-latent
inference and carried state demonstrates the distinction between predicting
episode-level context and preserving recurrent state, consistent with
Equation~\ref{eq:chain-decomposition}.

Our findings carry two implications: i) when Bayes predictive headroom is
zero, any predictive gain under the same visitation reflects a reduction in
finite-model error, and ii) for benchmark construction, the class of a setting
alone does not determine its difficulty. The same task yielded headroom well
above the null in one setting but not in another, which suggests reporting
estimated headroom alongside an MDP's class and its benchmark definition.

\paragraph{Limitations.} In the present analysis all estimated quantities
are properties of a closed loop: each method's visitation shapes the
information available to it, and that information shapes the method in
turn. We deliberately do not compare under a shared reference policy, since no
policy is neutral, random behavior collapses the visited region and competent
behavior encodes the context it has already identified, so any fixed choice
would characterize that policy rather than the learning problem. The policy
dependence is instead the mechanism under study: implicit identification is
produced by the agent's own behavior, and removing it would remove the
phenomenon. It is also the conditioning that matters in practice, since a
deployed mechanism operates under the visitation its training produces, rather
than under a standardized one.%

Extending comparison across visitations, through paired counterfactual resets
where simulators permit them, and constructing settings whose context yields
only to deliberately informative actions, are the future directions we
consider most fruitful.

\section{Conclusion}\label{sec:conclusion}
Motivated by real-world control, where the dynamics an agent faces vary and
are rarely observed directly, a growing number of methods target MDPs with
latent dynamics, but nothing tells a practitioner when any of them is needed:
in practice the choice often follows the environment's class and the method's
design intent. However, we show that both can be unreliable guides and propose
a principled replacement. Our analysis establishes, and our experiments
support, that the predictive opportunity available to a context mechanism is
set by where the latent information resides under the trained agent's
visitation, yielding a criterion assessed from the ordinary agent
alone. Behind it lies a simple observation: an agent that acts in a world also
performs inference about it, and whatever its behavior identifies along the
way is already available to the optimal predictor. What remains for such
additions is confined to specific regimes that measurement can identify,
allowing it to inform design.

\subsection*{AI use statement}
Generative AI tools were used to assist with software implementation,
including environmental benchmarks and analysis scripts, and to edit the
manuscript for language, wording, and phrasing to improve readability. AI was
also used to translate the proposition and its proof into a formal notation
for verification by a theorem prover. The authors reviewed all code, analyses,
and text and take full responsibility for the final content of this work.

\subsection*{Reproducibility statement}
Appendix~\ref{app:controlled_diagnostics} details the MiniGrid and
Point-Mass-Attractor diagnostics, Appendix~\ref{app:experimental_details}
reports the benchmark definitions, evaluation distributions, implementations,
and hyperparameters, while Appendix~\ref{app:calibration} reports the
calibration protocol. The reference
codebase\footnote{\url{https://anonymous.4open.science/r/considering_context/}}
includes all source code, along with configuration scripts and protocols.

\bibliography{main}
\bibliographystyle{iclr2027_conference}

\clearpage
\appendix

\section{Proofs and Technical Remarks}
\label{app:sim-proof}

\begin{proposition*}[Predictive headroom decomposition] Under the conditional-Markov assumption and a common transition distribution,
\begin{equation*}
\mathcal{I}_{\mathrm{emp}} = \mathcal{R}_{\mathrm{M}}-\mathcal{R}_{\mathrm{O}}
= \underbrace{\mathcal{R}_{\mathrm{M}}-
  \mathcal{R}_{\mathrm{H}}}_{\text{headroom closed by history}} +
\underbrace{\mathcal{I}_{\mathrm{hist}}}_{\text{residual Oracle headroom}}.
\end{equation*}
Consequently, $0 \leq \mathcal{I}_{\mathrm{hist}} \leq
\mathcal{I}_{\mathrm{emp}}$.
\end{proposition*}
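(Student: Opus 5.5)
The plan is to express each minimum risk as a conditional entropy under the single joint law of $(\theta,\tau_{\le t},a_t,s_{t+1})$ induced by $\theta\sim p(\theta\mid\alpha)$, the policy $\pi$, and the evaluation weighting over $t$. This is the role of the ``common transition distribution'' hypothesis: all three risks are expectations under one and the same law.

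The first step uses the fact that, for log loss, the Bayes-optimal predictor given an information set $X$ is the conditional law $p^\pi(s_{t+1}\mid X)$, and its risk is $H(s_{t+1}\mid X)$. This is the standard Gibbs inequality argument: for any predictor $q$, the excess risk is $\mathbb{E}\bigl[D_{\mathrm{KL}}\bigl(p(\cdot\mid X)\,\|\,q(\cdot\mid X)\bigr)\bigr]\ge 0$. It gives
\[
\mathcal{R}_{\mathrm{M}}=H(s_{t+1}\mid s_t,a_t),\qquad \mathcal{R}_{\mathrm{H}}=H(s_{t+1}\mid \tau_{\le t},a_t),
\]
with minimizers $\bar T_{\mathrm{emp}}$ and $\bar T_{\mathrm{hist}}$ from Equations~\ref{eq:empirical-marginal} and~\ref{eq:history-marginal}.

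For the Oracle, the conditional-Markov assumption $s_{t+1}\perp\tau_{<t}\mid(s_t,a_t,\theta)$ gives
\[
H(s_{t+1}\mid s_t,a_t,\theta)=H(s_{t+1}\mid \tau_{\le t},a_t,\theta),
\]
and both equal $\mathcal{R}_{\mathrm{O}}$ with minimizer $T_\theta$. For continuous states I would work directly with expected log losses, i.e.\ differential entropies or cross-entropies, assuming they are finite so that the differences below are well defined.

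The second step is the identity $\mathcal{I}_{\mathrm{emp}}=\mathcal{R}_{\mathrm{M}}-\mathcal{R}_{\mathrm{O}}$. From Equation~\ref{eq:mi}, write
\[
\mathcal{I}_{\mathrm{emp}}=I(\theta;s_{t+1}\mid s_t,a_t)=H(s_{t+1}\mid s_t,a_t)-H(s_{t+1}\mid s_t,a_t,\theta).
\]
Equivalently, taking the expectation of $\log T_\theta-\log\bar T_{\mathrm{emp}}$ under the joint law reproduces the KL form of Equation~\ref{eq:mi}.

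Similarly,
\[
\mathcal{I}_{\mathrm{hist}}=H(s_{t+1}\mid\tau_{\le t},a_t)-H(s_{t+1}\mid\tau_{\le t},a_t,\theta).
\]
By the Markov step, the last term equals $\mathcal{R}_{\mathrm{O}}$, which confirms Equation~\ref{eq:mi-history}. The decomposition then follows by adding and subtracting $\mathcal{R}_{\mathrm{H}}$, so it is a telescoping identity.

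For the inequalities, $\mathcal{I}_{\mathrm{hist}}\ge0$ is nonnegativity of conditional mutual information, or equivalently of the expected KL between $T_\theta$ and $\bar T_{\mathrm{hist}}$. For $\mathcal{I}_{\mathrm{hist}}\le\mathcal{I}_{\mathrm{emp}}$, I need $\mathcal{R}_{\mathrm{M}}-\mathcal{R}_{\mathrm{H}}\ge0$. This holds because $(s_t,a_t)$ is a function of $(\tau_{\le t},a_t)$: the predictor $\bar T_{\mathrm{emp}}$ is admissible for the history-aware problem, so the minimum over the larger class is no larger. Equivalently, the gap is $I(s_{t+1};\tau_{<t}\mid s_t,a_t)\ge0$.

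The main obstacle I expect is bookkeeping rather than depth. I must make sure all three risks are genuinely computed under the same joint law and time weighting; otherwise the telescoping fails and the $\mathcal{I}_{\mathrm{struct}}$-style reweighting issue reappears. I must also justify the Oracle equality, namely that conditioning on history in addition to $\theta$ does not lower the risk. That equality is exactly where the conditional-Markov assumption enters, and I would state it explicitly as the reason the Oracle risk in Equation~\ref{eq:mi-history} coincides with $\mathcal{R}_{\mathrm{O}}$.
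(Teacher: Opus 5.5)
Your proposal is correct and follows the paper's proof: you write $\mathcal{R}_{\mathrm{M}}$, $\mathcal{R}_{\mathrm{H}}$ and $\mathcal{R}_{\mathrm{O}}$ as conditional entropies under a single joint law, use the conditional-Markov property to equate $H(s_{t+1}\mid s_t,a_t,\theta)$ with $H(s_{t+1}\mid \tau_{\le t},a_t,\theta)$, and obtain the decomposition by telescoping. Beyond the paper, you also spell out two steps it leaves implicit: the Gibbs-inequality justification of the entropy forms, and the nesting argument $\mathcal{R}_{\mathrm{M}}-\mathcal{R}_{\mathrm{H}}=I(s_{t+1};\tau_{<t}\mid s_t,a_t)\ge 0$ that gives the final inequalities.
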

\begin{proof}
$\mathcal{R}_{\mathrm{M}}=H(s_{t+1} \mid s_t,a_t)$,
$\mathcal{R}_{\mathrm{H}}=H(s_{t+1} \mid \tau_{\leq t},a_t)$, and the
conditional Markov property gives $\mathcal{R}_{\mathrm{O}}=H(s_{t+1} \mid
s_t,a_t,\theta)=H(s_{t+1} \mid \tau_{\leq t},a_t,\theta)$. Subtracting these
risks yields Equation~\ref{eq:headroom-decomposition}.
\end{proof}

\begin{proposition}[Value sensitivity to transition error]
\label{prop:sim}
Let $T_\theta$ and $\widehat T$ be transition kernels over a shared
state--action space, with common bounded reward $r \in [0,R_{\max}]$ and
discount $\gamma \in [0,1)$. For any stationary Markov policy $\pi$ and any
  state $s$,
\begin{equation*}
    \bigl| V^\pi_{T_\theta}(s) - V^\pi_{\widehat T}(s)
    \bigr|\;\le\; \frac{\gamma R_{\max}}{(1 - \gamma)^2}\,\sup_{s,
      a}\sqrt{\tfrac{1}{2}\, D_{\mathrm{KL}}\!\bigl(T_\theta(\cdot \mid s,
      a)\,\|\,\widehat T(\cdot \mid s,a)\bigr)}.
\end{equation*}
\end{proposition}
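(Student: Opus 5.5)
The plan is the standard simulation-lemma argument followed by Pinsker's inequality. Fix the stationary Markov policy $\pi$. Write $P^\pi_\theta$ and $\widehat P^\pi$ for the state-to-state operators $(P^\pi f)(s)=\sum_a \pi(a\mid s)\sum_{s'} T(s'\mid s,a) f(s')$ (integrals in the continuous case). Let $r^\pi(s)=\sum_a\pi(a\mid s)r(s,a)$. Both value functions then satisfy their Bellman equations,
\begin{equation*}
V^\pi_{T_\theta} = r^\pi + \gamma P^\pi_\theta V^\pi_{T_\theta},\qquad V^\pi_{\widehat T} = r^\pi + \gamma \widehat P^\pi V^\pi_{\widehat T}.
\end{equation*}
Since the reward is common to both, it cancels when we subtract. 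We add and subtract $\gamma P^\pi_\theta V^\pi_{\widehat T}$ to get
\begin{equation*}
V^\pi_{T_\theta}-V^\pi_{\widehat T} = \gamma P^\pi_\theta\bigl(V^\pi_{T_\theta}-V^\pi_{\widehat T}\bigr) + \gamma\bigl(P^\pi_\theta-\widehat P^\pi\bigr)V^\pi_{\widehat T}.
\end{equation*}

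Next we take sup norms. The operator $P^\pi_\theta$ is a Markov operator, so it is a sup-norm contraction with constant $1$. Setting $\Delta=\|V^\pi_{T_\theta}-V^\pi_{\widehat T}\|_\infty$ gives $\Delta\le\gamma\Delta+\gamma\|(P^\pi_\theta-\widehat P^\pi)V^\pi_{\widehat T}\|_\infty$, hence $\Delta\le \frac{\gamma}{1-\gamma}\|(P^\pi_\theta-\widehat P^\pi)V^\pi_{\widehat T}\|_\infty$. The term $\Delta$ is finite because both values lie in $[0,R_{\max}/(1-\gamma)]$.

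To bound the remaining term, fix $s$. Because $T_\theta(\cdot\mid s,a)$ and $\widehat T(\cdot\mid s,a)$ are both probability measures, we may subtract any constant from $V^\pi_{\widehat T}$ without changing $(T_\theta-\widehat T)V^\pi_{\widehat T}$. Centering at $R_{\max}/(2(1-\gamma))$ and using the range $[0,R_{\max}/(1-\gamma)]$ gives
\begin{equation*}
\Bigl|\textstyle\sum_{s'}\bigl(T_\theta-\widehat T\bigr)(s'\mid s,a)V^\pi_{\widehat T}(s')\Bigr| \le \frac{R_{\max}}{1-\gamma}\,\mathrm{TV}\bigl(T_\theta(\cdot\mid s,a),\widehat T(\cdot\mid s,a)\bigr),
\end{equation*}
where $\mathrm{TV}$ is the sup-over-events total variation. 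The cruder bound $\|V\|_\infty\cdot\|T_\theta-\widehat T\|_1$ would lose a factor of $2$. Averaging over $a\sim\pi(\cdot\mid s)$ and taking the sup over $(s,a)$ keeps the same bound. Pinsker's inequality, $\mathrm{TV}\le\sqrt{\tfrac12 D_{\mathrm{KL}}}$, then gives the stated bound $\frac{\gamma R_{\max}}{(1-\gamma)^2}\sup_{s,a}\sqrt{\tfrac12 D_{\mathrm{KL}}}$, which holds pointwise at every $s$.

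The only delicate point is the constant, because the stated bound has no factor of $2$. It therefore matters that total variation is normalized as half the $L^1$ distance and that the oscillation (span) of $V^\pi_{\widehat T}$, not its sup norm, is used, via the centering step. Measurability in the general-state case is routine if the kernels are assumed regular. If the supremum of the KL divergence is infinite, the bound holds trivially.
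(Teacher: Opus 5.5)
Your proposal is correct and follows essentially the same route as the paper. You subtract the two Bellman equations, insert $\pm\gamma T^\pi_\theta \widehat V$, and use non-expansion of the Markov operator on the first term. For the second term you center $\widehat V$ at its midrange, so that its span (rather than its sup norm) multiplies $\|T^\pi_\theta-\widehat T^\pi\|_1 = 2D_{\mathrm{TV}}$; you then rearrange and finish with Pinsker.

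The only cosmetic difference is the order of bounding and averaging. You center and bound at each $(s,a)$ before averaging over $\pi(\cdot\mid s)$, whereas the paper bounds the policy-mixed kernels and then applies the triangle inequality (convexity) for total variation. Both give the same constant.
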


\begin{proof}
    Write $V_\theta = V^\pi_{T_\theta}$, $\widehat V = V^\pi_{\widehat T}$,
    and $T^\pi(\cdot \mid s) = \sum_a \pi(a \mid s)\, T(\cdot \mid s,a)$ for
    the policy-induced state kernel. Subtracting the two Bellman equations,
    whose shared reward cancels, and inserting $\pm\,\gamma\, T_\theta^\pi
    \widehat V$:
    \begin{equation*}
        V_\theta(s) - \widehat V(s)
        = \gamma\, T_\theta^\pi(\cdot \mid s)
          \bigl[V_\theta - \widehat V\bigr]
        + \gamma\, \bigl[T_\theta^\pi - \widehat T^\pi\bigr]
          (\cdot \mid s)\, \widehat V .
    \end{equation*}
    We bound the two terms. The first is a sup-norm non-expansion:
    \begin{equation*}
        \bigl| T_\theta^\pi(\cdot \mid s)
        \bigl[V_\theta - \widehat V\bigr] \bigr|
        \;\le\; \| V_\theta - \widehat V \|_\infty
        \qquad \text{($T_\theta^\pi(\cdot \mid s)$ is a probability measure).}
    \end{equation*}
    For the second, $T_\theta^\pi - \widehat T^\pi$ has zero total mass, so
    \begin{align*}
        \bigl| [T_\theta^\pi - \widehat T^\pi]\, \widehat V \bigr|
        &= \bigl| [T_\theta^\pi - \widehat T^\pi]\,
          (\widehat V - c) \bigr|
        && \text{(zero mass: subtract any constant $c$)} \\
        &\le \| \widehat V - c \|_\infty \,
          \bigl\| T_\theta^\pi - \widehat T^\pi \bigr\|_1
        && \text{(H\"older's ineq.)} \\
        &\le \frac{R_{\max}}{2(1 - \gamma)} \cdot 2\,
          D_{\mathrm{TV}}(T_\theta^\pi, \widehat T^\pi)
        && \text{($c$ the midrange of
          $\widehat V \in [0, \tfrac{R_{\max}}{1-\gamma}]$)} \\
        &\le \frac{R_{\max}}{1 - \gamma}
          \sup_{s,a} D_{\mathrm{TV}}(T_\theta, \widehat T)
        && \text{(triangle ineq.\ for TV under the shared policy mixture).}
    \end{align*}
    Taking absolute values in the identity, applying the triangle inequality
    with the two bounds, and taking a supremum over $s$:
    \begin{align*}
        \| V_\theta - \widehat V \|_\infty
        &\le \gamma \, \| V_\theta - \widehat V \|_\infty
        + \frac{\gamma R_{\max}}{1 - \gamma}
          \sup_{s,a} D_{\mathrm{TV}}(T_\theta, \widehat T) \\
        &\le \frac{\gamma R_{\max}}{(1 - \gamma)^2}
          \sup_{s,a} D_{\mathrm{TV}}(T_\theta, \widehat T)
        && \text{(rearrange; $\gamma < 1$,
          $\|V_\theta - \widehat V\|_\infty < \infty$)} \\
        &\le \frac{\gamma R_{\max}}{(1 - \gamma)^2}
          \sup_{s,a} \sqrt{\tfrac{1}{2}\,
          D_{\mathrm{KL}}\!\bigl(T_\theta \,\|\, \widehat T\bigr)}
        && \text{(Pinsker's ineq.)}
    \end{align*}
    Since $|V_\theta(s) - \widehat V(s)| \le \| V_\theta - \widehat V
    \|_\infty$ for every $s$, the claim follows.
\end{proof}

\textbf{Jensen-gap identity.} At each fixed $(s,a)$, the integrand of
Equation~\ref{eq:mi-struct} satisfies
\begin{equation}
\mathbb{E}_\theta D_{\mathrm{KL}}(T_\theta\|\bar T_{\mathrm{prior}}) = H(\bar T_{\mathrm{prior}})-\mathbb{E}_\theta H(T_\theta),
\label{eq:jensen}
\end{equation}
with the $(s,a)$ arguments suppressed for clarity. The entropy and
expected-log-loss identities assume finite terms under a common dominating
measure.

\textbf{Remark on recovering the context.} Tasks requiring $\theta$ itself
need the stronger condition $p(\theta\mid\tau_{\leq t})=p(\theta\mid c_t)$ and
trajectory-level information such as $I(\theta;\tau)$. Establishing this
condition from observational trajectories requires assumptions beyond episode
grouping~\citep{hyvarinen2019nonlinear,khemakhem2020variational}.

\section{Controlled Diagnostic Details}
\label{app:controlled_diagnostics}
The controlled diagnostic uses two state spaces to test the same distinction.
MiniGrid makes every $(s,a)$ pair enumerable, so
$\mathcal{I}_{\mathrm{struct}}$ can be evaluated directly and visitation can
be manipulated by setting. Point-Mass-Attractor retains known one-step kernels
in continuous state, where the empirical quantity requires estimating only the
occupancy-induced context posterior. MiniGrid-Resistance supplies a contrast
with low $\mathcal{I}_{\mathrm{struct}}$ within the discrete setting.

\textbf{Shared evaluation protocol.} Context is sampled once per episode and
omitted from the predictive information set, but retained in the logged data
used to estimate conditional information. Because every context-conditional
kernel is known in closed form, no next-state distribution is estimated from
samples: empirical values score each logged transition's exact kernel against
a cross-fitted reference mixture, so every transition contributes and no
count-based filtering occurs. All pooled transitions receive equal weight,
while confidence intervals resample episodes. The standardized
$\mathcal{I}_{\mathrm{struct}}$ uses a fixed $\theta$-independent query
distribution.

\begin{figure}[t]
\centering
\includegraphics[width=\textwidth]{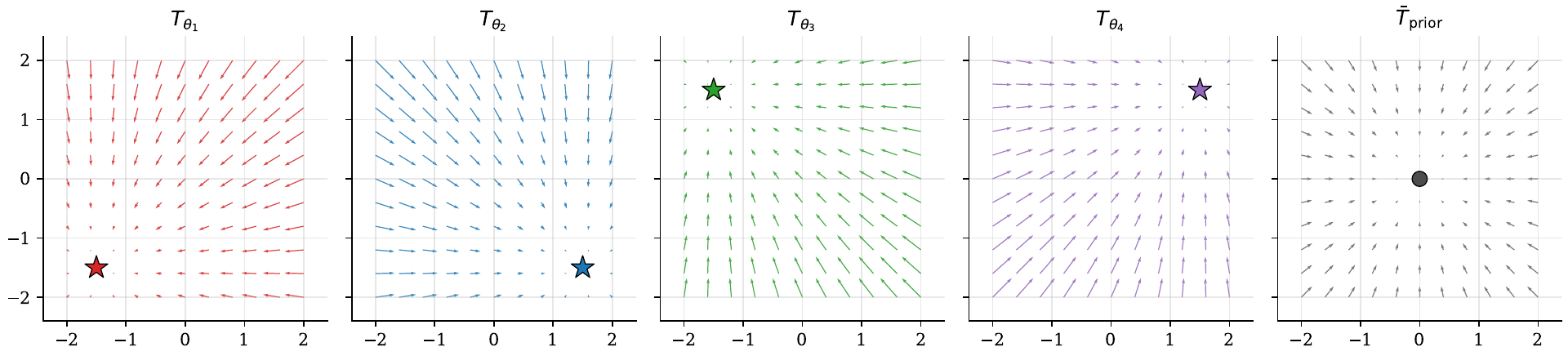}
\caption{Point-Mass-Attractor transition fields. Each episode uses one hidden
  attractor $\theta_i$, inducing a distinct field $T_{\theta_i}$; prior
  averaging produces a center-seeking field $\bar T_{\mathrm{prior}}$ that
  governs no episode.}
\label{fig:attractor_force_field}
\end{figure}

\begin{figure}[t]
\centering
\includegraphics[width=0.8\textwidth]{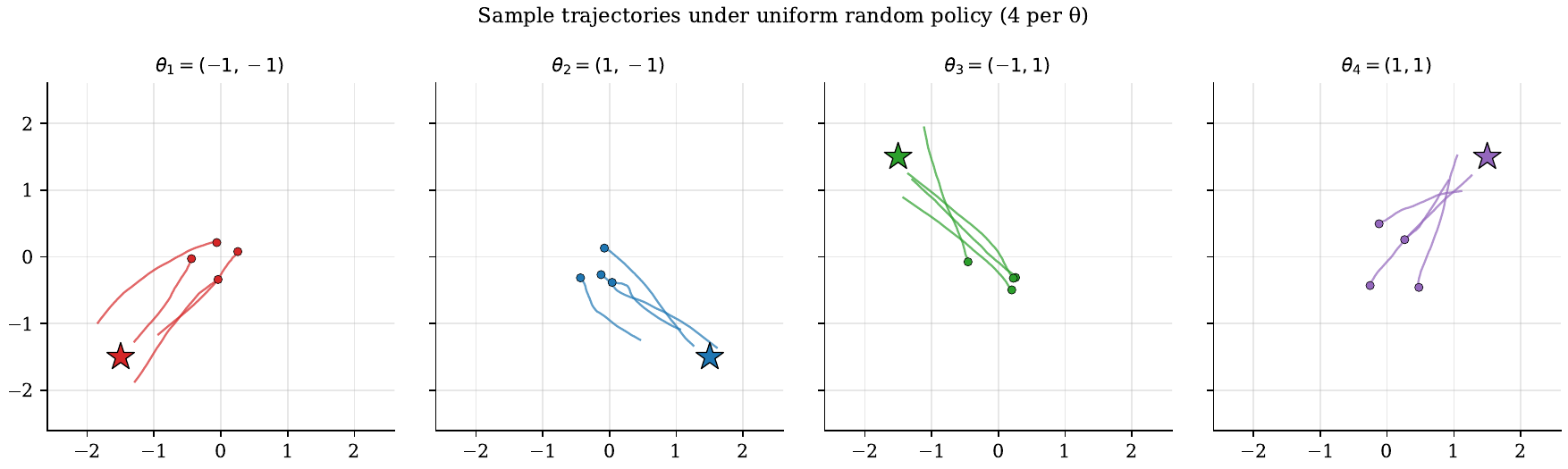}
\caption{Point-Mass-Attractor trajectories under uniform random actions. The
  hidden attractor is fixed within an episode, causing visited states to become
  informative about context.}
\label{fig:attractor-env}
\end{figure}

\textbf{Discrete state: MiniGrid.} \label{app:minigrid_diagnostic} Both
MiniGrid families use the fully observed Markov state $(x,y,d)$ on the
\texttt{Empty-8x8} layout and leave the goal and reward unchanged. In
MiniGrid-Wind, the episode context is $\mathbf v=(v_x,v_y)$ with independent
coordinates sampled from $\mathcal{U}[-v_{\max},v_{\max}]$. After the native
action, each axis receives drift $\operatorname{sign}(v_j)$ with probability
$|v_j|$ and zero drift otherwise. Invalid displacement into a wall is
blocked. MiniGrid-Resistance samples
$\epsilon\sim\mathcal{U}[0,\epsilon_{\max}]$; an action takes effect with
probability $1-\epsilon$ and otherwise leaves the state unchanged. Zero
perturbation recovers the base environment in both families.

For the standardized reference, $\mu$ is uniform over interior position,
direction, and the seven MiniGrid actions. The matched-query reference instead
weights those same local calculations by $(s,a)$ frequencies in the collected
trajectories. Figure~\ref{fig:diagnostic-sweeps} reports the full $v_{\max}$
and $\epsilon_{\max}$ sweeps.

Each setting uses $5{,}000$ uniform-random episodes of at most $256$ steps.
The $\mathcal{I}_{\mathrm{struct}}$ calculation enumerates $\approx 1{,}000$
interior $(s,a)$ queries and integrates the continuous context prior by
quadrature. The empirical estimator is a known-kernel occupancy plug-in: each
logged transition contributes the exact Kullback--Leibler divergence between
its context-conditional kernel and a kernel mixture over the transitions
observed at the same $(s,a)$ cell. The reported value cross-fits the mixture
leave-one-episode-out with one pseudo-count of uniform smoothing on the cell's
outcome support, which makes it a conservative upper estimate of the true
headroom in expectation; the within-sample mixture value is always within 2\%
of it. Percentile CIs come from 200 episode-level bootstrap resamples with
full re-estimation of all mixtures. The sweeps use
$v_{\max}\in\{0,0.1,0.3,0.5,0.7,1.0\}$ and
$\epsilon_{\max}\in\{0,0.1,0.3,0.5,0.7,0.9\}$.

\textbf{Continuous state: Point-Mass-Attractor.}
\label{app:attractor_diagnostic}
Point-Mass-Attractor is a unit-mass two-dimensional system integrated at
$\Delta t=0.05$. The hidden attractor is selected uniformly from the four
corners $(\pm1.5,\pm1.5)$ and held fixed for the episode. At each step, the
agent applies $\mathbf a_t\in[-1,1]^2$ and the attractor contributes force
$-k(\mathbf p_t-\theta)$. Independent Euler--Maruyama increments add variance
$\sigma^2\Delta t$ to each velocity coordinate and $\sigma^2\Delta t^3$ to
each position coordinate. We use $k=0.3$ and $\sigma=0.1$ for the
representative setting.

Episodes begin with $\mathbf p_0\sim\mathcal{U}[-0.5,0.5]^2$ and $\mathbf
v_0=\mathbf{0}$, and run for 50 steps
(Figure~\ref{fig:attractor-env}). Positions are confined to $[-2.5,2.5]^2$ by
elastic reflection with velocity damping 0.5. Away from the boundary, the
conditional kernel is diagonal Gaussian, with four means separated by a
query-independent context displacement. Thus, the uniform and matched-query
values of $\mathcal{I}_{\mathrm{struct}}$ in the interior coincide
exactly. Per coordinate, the context sign defines a binary-input Gaussian
channel, so $\mathcal{I}_{\mathrm{struct}}$ can be computed by one-dimensional
Gauss--Hermite quadrature. A standard bound shows that boundary reflection has
a negligible effect at the occupancies used here.

Each $k$ setting uses $5{,}000$ episodes, with
$k\in\{0.05,0.1,0.2,0.3,0.5\}$. The empirical estimate scores every logged
transition's exact context-conditional Gaussian kernel, including the exact
single-reflection boundary densities, against the four-kernel mixture weighted
by a cross-fitted $k$-NN context posterior (episode-level folds; the number of
neighbors is selected by cross-validated log loss). For each candidate
neighbor count the fitted mixture is a valid state-measurable predictor, so
the estimate is biased upward in expectation up to a small model-selection
effect. Percentile CIs come from $1{,}000$ episode-level bootstrap resamples
of per-episode contributions. As an internal certificate, transitions
immediately after a zero-velocity reset in the intervention datasets below
have an exactly known uniform context posterior; the fitted estimator
reproduces the exact stratum contribution within about 1.5\% (for example,
0.4845 versus 0.4834 at reset interval two).

\textbf{Locating the visitation effect.} Two checks vary the information
carried by the current state. First, state projection removes coordinates from
the conditioning variable while retaining the full next state. MiniGrid uses
$(x,y,d)$, direction alone, or no state; Point-Mass-Attractor uses $(\mathbf
p,\mathbf v)$, position, a $5\times5$ position bin, or no state. Coarsening
therefore changes the estimand. The MiniGrid projections are evaluated at a
declared discretization of ten bins per context coordinate, with both the
projected conditional and marginal formed as occupancy-weighted exact-kernel
mixtures, cross-fitted leave-one-episode-out. The Point-Mass-Attractor
projections use a cross-fitted classifier contrast: with four known context
values, the estimand equals the difference between the context entropy given
the projected state and given the projected state, action, and next state,
each estimated by a cross-fitted $k$-NN class posterior (for the no-state
projection the first term is the exact design prior $\ln 4$, making the
estimate conservative). Every projection continues to condition on action.

Second, resets intervene on visitation while preserving the episode context.
MiniGrid resets position and direction before the current state is recorded,
at intervals of ten, five, or two steps. Point-Mass-Attractor resets position
at intervals of 25, ten, five, or two steps. Its control either resets
velocity to zero or preserves it across the position reset. All subsequent
transitions are recorded from the post-reset state.

\begin{figure}[t]
\centering
\includegraphics[width=\linewidth]{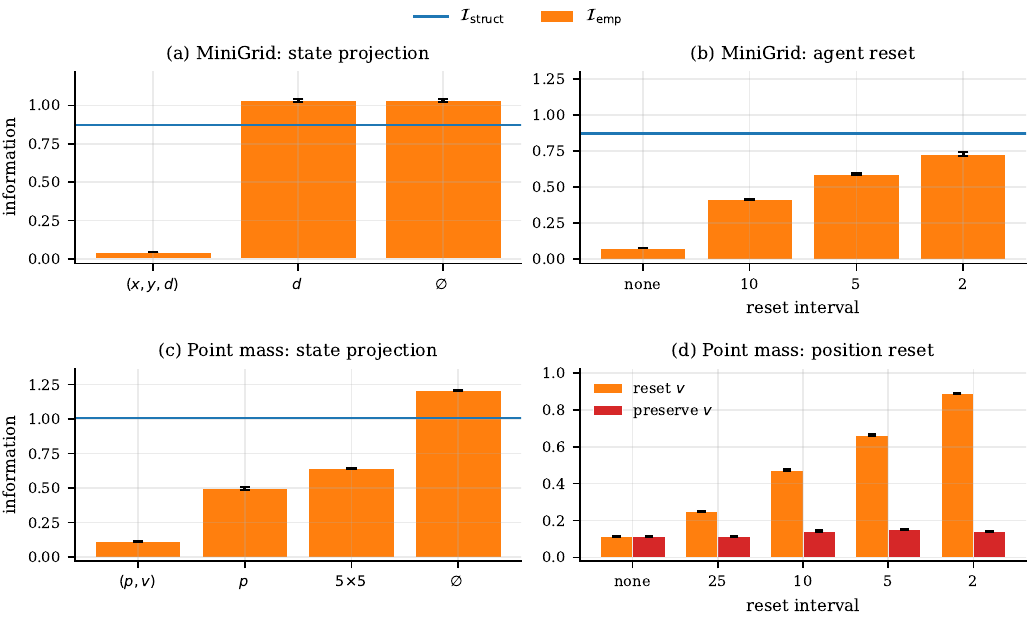}
\caption{Parallel checks of state-mediated identification. State projection
  removes current-state information in MiniGrid (a) and Point-Mass-Attractor
  (c). State resets weaken context-dependent visitation in MiniGrid (b). In
  Point-Mass-Attractor (d), headroom rises when velocity is reset and remains
  near baseline when velocity is preserved. Error bars denote $95\%$ bootstrap
  CIs. Where shown, blue lines are full-state structural references, not
  bounds after the information set or query distribution changes.}
\label{fig:diagnostic-interventions}
\end{figure}

The projection results, depicted in
Figure~\ref{fig:diagnostic-interventions}, locate context information already
present in the observed state. Removing position from MiniGrid raises estimated
$\mathcal{I}_{\mathrm{emp}}$ by about 2{,}300\%. In Point-Mass-Attractor,
progressively removing velocity and position raises it by about 980\%. The
resets partially restore headroom without imposing the standardized query
distribution. Relative to baseline, estimated MiniGrid headroom increases by
430\%, 658\%, and 833\% as the reset interval decreases. In
Point-Mass-Attractor, resetting both position and velocity raises headroom by
as much as 698\%, increasing monotonically as the reset interval shrinks,
whereas preserving velocity keeps it within 35\% of baseline.

\section{Experimental Details}
\label{app:experimental_details}

\textbf{Benchmark environments.} The i.i.d.\ settings use CARL DMC Walker
Walk~\citep{benjamins2022contextualize}. The identifiable setting samples a
latent horizontal wind uniformly per episode, with nonzero fluid density so the
wind exerts force on the body. The hard setting samples a latent
actuator-strength scale uniformly around nominal, with gravity fixed at nominal.
An $84$-context benchmark panel is held out for the final evaluation. The drift setting follows the registered RWRL
protocol~\citep{dulacarnold2021challenges}: one Walker parameter performs a
random walk across episodes at three step sizes on frozen realizations shared
by every method, with the step sizes chosen so that consecutive-episode
autocorrelation stays above $0.5$ at the highest rate.

\textbf{Agents and conditioning interfaces.} All agents use the common
DreamerV3 hyperparameters~\citep{hafner2025mastering} in
Table~\ref{tab:common-hyperparameters}. Every method shares this backbone and differs
only in its context configuration, which selects one information path while
leaving the training stack unchanged: the context-blind baseline receives no
context, DALI-S infers one from a recent window, cRSSM-D receives the true
context through deep conditioning, the carried-state variant preserves the
recurrent state across episode boundaries, and the chain-latent agent
maintains a cross-episode latent inspired by
LILAC~\citep{pmlr-v139-xie21c}. The true-context method receives the context
as a separate normalized observation with explicit bounds, and the ordinary
observation encoder always excludes it. DreamerV3 already carries a recurrent
within-episode state, so its comparison with DALI isolates the value of a
dedicated recent-history pathway while holding that recurrent backbone fixed.

DALI-S~\citep{roder2025dynamics} uses a causal $64$-step state--action encoder
with one head and width $256$, an $8$-dimensional context, and a two-layer
width-$128$ forward model, with unit loss scale and context learning rate
$10^{-4}$. The inferred context is detached before it enters the world model,
its heads, the actor, and the critic, and the cross-modal loss is omitted.
cRSSM-D~\citep{prasanna2024dreaming} receives true normalized context through
the transition and prediction/control heads and adds no learned context
encoder. The chain-latent baseline uses an $8$-dimensional context, a
two-layer width-$128$ episode encoder, and a width-$64$ GRU over the preceding
$128$ episode latents, trained at $10^{-4}$ with chain and consistency loss
scales $1$ and $0.1$. All comparators are implemented within the same
DreamerV3 codebase using the conditioning interfaces described above.

\textbf{Training and evaluation.} Return entries are IQM over paired
evaluation contexts after within-seed aggregation, with CARL return normalized
by the maximum episodic return of $1000$. Confidence intervals use $10{,}000$
paired, seed-level bootstrap samples. Contexts are not treated as independent
units. On the drift setting the primary endpoint is the start transient over
the first $25$ steps, computed post hoc from final checkpoints. Drift entries
are means of seed means with seed ranges. Return and per-seed collapse counts
are secondary.

Runs use one $24$ GB NVIDIA L4. A $10^6$-step run takes about $9$ hours, with
$6$--$7$ GB peak allocated memory. Checkpoint evaluation freezes the model
and policy and performs no learning updates, while only explicitly carried
recurrent state persists across episode resets.

\textbf{Probe implementation.} Probe architectures and training settings are
reported beside the agent settings in
Table~\ref{tab:common-hyperparameters}. Scoring is cross-fitted at episode
level, so every transition is scored by a probe that never trained on its
episode. The permutation null re-pairs episodes with contexts drawn from other
episodes, and the decodability readout uses the feedforward architecture with
a squared-error objective to predict the context from the current state and
action; its cross-fitted $R^2$ is reported. Learned-state inputs are extracted
before the target transition is observed. The shared Gaussian likelihood
defines the predictive risk used in continuous benchmark
settings. Episode-start estimates restrict every quantity to the beginning of
each episode. During training, a frozen panel of open-loop trajectories, one
per evaluation context with autocorrelated exploration actions of length $64$,
tracks prior prediction loss at every evaluation point.

\begin{table}[t]
\centering
\caption{Agent and probe hyperparameters.}
\label{tab:common-hyperparameters}
\begin{minipage}[t]{0.49\linewidth}
\centering
\scriptsize
\begin{tabular}{@{}ll@{}}
\toprule
Agent setting & Value \\
\midrule
Backbone & DreamerV3, block-GRU \\
Parameters (Walker baseline) & $9.7$M ($4.3$M transition) \\
Deterministic state & $2048$ \\
Stochastic state & $32\times16$ categorical \\
RSSM hidden width & $256$ \\
Encoder / decoder & $3$ layers, width $256$ \\
Actor / critic & $3$ layers, width $256$ \\
Activation / normalization & SiLU / layer norm \\
Batch / sequence & $16 / 64$ \\
Replay capacity / ratio & $5\times10^5 / 512$ \\
Optimizer / learning rate & LaProp / $4\times10^{-5}$ \\
Betas / warmup & $(0.9,0.999) / 1000$ \\
Weight decay & none \\
Gradient clipping & adaptive, $0.3$ \\
KL free threshold & $1.0$ \\
Imagination horizon & $15$ \\
Discount / $\lambda$ & $1-1/333$ / $0.95$ \\
Actor entropy scale & $3\times10^{-4}$ \\
Action repeat & $1$ \\
\bottomrule
\end{tabular}
\end{minipage}\hfill
\begin{minipage}[t]{0.48\linewidth}
\centering
\scriptsize
\begin{tabular}{@{}ll@{}}
\toprule
Probe setting & Value \\
\midrule
Feedforward architecture & $2$-layer MLP, width $256$ \\
Recurrent architecture & $1$-layer GRU, width $128$ \\
Activation & SiLU (feedforward) \\
Predictive output & diagonal Gaussian \\
Optimizer / learning rate & AdamW / $10^{-3}$ \\
Training updates & $4{,}000$ \\
Batch & $1{,}024$ transitions / $32$ episodes \\
Feedforward weight decay & $\{10^{-4},3\times10^{-2},1\}$ \\
Recurrent weight decay & $\{10^{-4},3\times10^{-2}\}$ \\
Cross-fitting & $5$ episode-level folds \\
History span & $16$ steps / full episode \\
Episode-start window & first $25$ steps \\
Selection / calibration & validation split \\
\bottomrule
\end{tabular}
\end{minipage}
\end{table}

\section{Analytical Calibration Details}
\label{app:calibration}

The considered benchmark settings do not offer exact information targets,
while the diagnostic environments have known kernels but no analytic probe
targets, since the proposed probes have closed-form equivalents only under
linear-Gaussian dynamics. To this end, we calibrate the measurements using the
simplest such system.

\textbf{Environment and analytical quantities.} Figure~\ref{fig:calibration-environment}
shows the calibration task: a two-dimensional stabilization point mass whose
state $(x,v,d)$ is fully
observed and whose hidden regime $z\in\{-1,+1\}$ follows a slow-switching
Markov chain across episodes with stay probability $0.9$. The disturbance
follows $d_{t+1}=\rho R(z\omega)d_t+\sigma\sqrt{1-\rho^2}\epsilon_t$, with
rotation $R$, $\rho=0.9$, $\omega=\pi/4$, $\sigma=0.3$, and independent
$\epsilon_t\sim\mathcal{N}(0,I_2)$. Velocity and position update as
$v_{t+1}=0.9v_t+0.5a_t+d_{t+1}$ and $x_{t+1}=x_t+v_{t+1}$; the reward is
$-(\lVert x\rVert^2+0.01\lVert a\rVert^2)$. At reset, $x_0=v_0=0$ and
$d_0\sim\mathcal{N}(0,\sigma^2I_2)$ independently of the regime. The first
regime is drawn uniformly and episodes last $200$ steps. Conditional
disturbance densities and disturbance-history posteriors are available
analytically: their averaged scores are evaluated by numerical integration or
sample averages.

\begin{figure}[t]
\centering
\includegraphics[width=\textwidth]{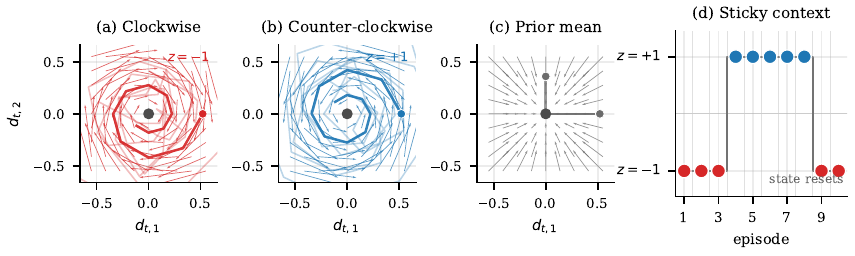}
\caption{Analytical calibration environment. The hidden regime determines the
  rotation of the autoregressive disturbance. The physical state resets
  between episodes while the regime follows a Markov chain. The prior mixture
  averages the two conditional disturbance fields but is not a realized
  regime.}
\label{fig:calibration-environment}
\end{figure}

\textbf{Evaluation protocol.} We evaluate DreamerV3, DALI-S, carried state,
the LILAC-inspired chain-latent agent, and cRSSM-D using the common agent and
probe settings in Table~\ref{tab:common-hyperparameters}. Each method uses
seeds $0$--$2$, trains for $10^5$ environment steps, and is evaluated on $10$
episodes. The primary setting uses the sticky context chain. An
independent-draw control sets the stay probability to $0.5$, and a
short-episode variant truncates episodes at $12$ steps to leave a nonzero
residual.

\textbf{Score-based targets and information ceilings.} The probe targets are
ideal \emph{diagonal-Gaussian score gaps}, rather than mutual
information. Given $(S_t,A_t)$, the six observed next-state coordinates share
one two-dimensional innovation $d_{t+1}$ across the position, velocity, and
disturbance blocks, so the conditional covariance has rank two and no
six-dimensional density exists; the coordinatewise diagonal-Gaussian score is
the well-defined reference, and the three identical blocks contribute the
factor of three in its gap. With uniform regime weights the population gap is
\[
G_0=\tfrac{3}{2}\,\mathbb{E}\sum_{j=1}^{2}
\log\!\Big[1+\tfrac{(\mu_{+}-\mu_{-})_j^2}{4\tau^2}\Big]=2.518\ 
\quad (\tau^2=\sigma^2(1-\rho^2)),
\]
evaluated by quadrature, whereas the sampled evaluation in
Table~\ref{tab:calibration} gives $2.51\pm0.05$. Because
$w(1-w)\le\tfrac14$ for any regime probability $w$, $G_0$ upper-bounds the
state-conditional score gap. The corresponding \emph{information} quantities
are much smaller: the binary regime carries at most $\ln 2\approx0.693$ nats,
and the stationary disturbance-mixture information is
$I(Z;D_{t+1}\mid D_t)=0.507$, which upper-bounds the memoryless context
information $I(Z;S_{t+1}\mid S_t,A_t)$ here because $(S_t,A_t)$ determines
$D_t$ and $D_{t+1}$ is independent of $(S_t,A_t)$ given $(Z,D_t)$.

\textbf{Calibration results.} The estimated score gap in
Table~\ref{tab:calibration} tracks the exact target at a stable ratio in the
primary and independent-draw settings, which characterizes its attenuation on
the calibration environment. The unadjusted reference upper-bounds the population context gain
for moment-matched diagonal-Gaussian predictors conditioned on state and
action, by the argument above. The shuffled null stays near zero, while
history recovery reaches $0.89 \pm 0.03$ of the $0.99$ reference computed from
the exact disturbance-history filter, giving a recovered fraction for the
nested disturbance-only information sets rather than a certified bound on the
full on-policy history optimum. The estimated sufficiency gap is near zero for
the observed-context agent, concentrates at episode start for every inference
method, and is largest for carried state, whose recurrent state continues the
previous episode through the reset. The chain classes have no advantage at
episode start, consistent with the small cross-episode increment in the closed
form.

The short-episode variant yields an exact per-transition residual of
$0.24$; since the scoring window covers all twelve steps of this variant,
start-window and full-episode averages coincide there, and the probes
reproduce the residual as a sufficiency gap that persists for the
context-blind model and nearly vanishes for the observed-context one. Under
the independent-draw control the exact cross-episode increment is zero and the
chain classes remain indistinguishable from within-episode ones, consistent
with P3. The characterized ratio reproduces at $0.51 \pm 0.04$. 

These
controls support using the probes as comparative diagnostics at the operating
episode length; the attenuation ratio is characterized on this environment
rather than assumed to transfer elsewhere.

\begin{table}[t]
\centering
\scriptsize
\caption{Probe calibration on the analytical point-mass environment. The
  context score gap is a diagonal-Gaussian log-score difference: the exact
  column is a sampled evaluation of the $2.518$-nat population gap, and the
  estimate column reports the estimate-to-target ratio. History recovery
  reports a recovered fraction and the shuffled null a probe-risk difference.
  Entries are IQM $\pm$ the larger 95\% bootstrap CI half-width.}
\label{tab:calibration}
\begin{minipage}[t]{0.30\linewidth}
\centering
\begin{tabular}{@{}lcc@{}}
\multicolumn{3}{@{}l}{\emph{(a) Probe calibration}} \\
\toprule
Quantity & Exact & Est. \\
\midrule
Context score gap & $2.51 \pm 0.05$ & $0.48 \pm 0.03$ \\
History recovery & $0.99$ & $0.89 \pm 0.03$ \\
Shuffled null & $0$ & $-0.04 \pm 0.02$ \\
\bottomrule
\end{tabular}
\end{minipage}
\end{table}

\section{Additional Results}
\label{app:results}

\begin{figure}[!htb]
\centering
\includegraphics[width=\linewidth]{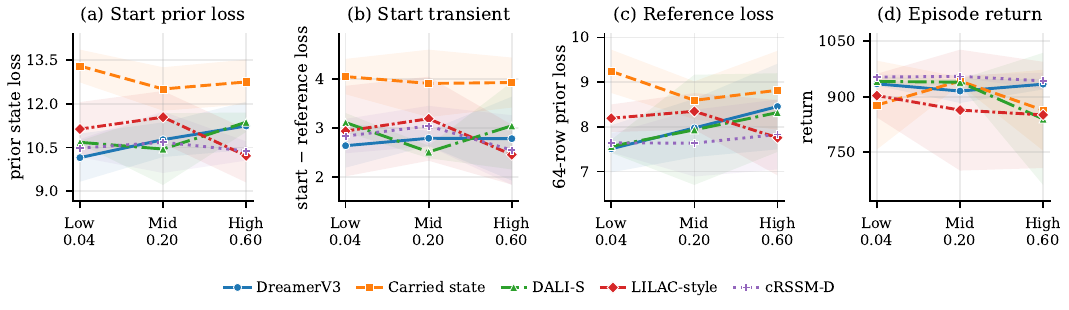}
\caption{Drift metrics at the three rates on frozen realizations shared across
  methods. Lower is better in (a)--(c), higher in (d); lines are seed means
  and bands show sample stdev. The lowest mean startup losses shift from the
  context-blind baseline to the transition-window and chain-latent agents as
  drift increases.}
\label{fig:drift-endpoints}
\end{figure}

\end{document}